\theoremstyle{plain}
\newtheorem{theorem}{Theorem}[section]
\newtheorem{lemma}[theorem]{Lemma}
\theoremstyle{definition}
\title{DexCatch:  Learning to Catch Arbitrary Objects with Dexterous Hands}
\author{
  Fengbo Lan$^{1,*}$ \quad Shengjie Wang$^{1,2,3,*}$ \quad Yunzhe Zhang$^{1}$ \quad \\ \textbf{Haotian Xu}$^1$ \quad \textbf{Oluwatosin Oseni}$^4$ \quad \textbf{Ziye Zhang}$^1$ \quad \textbf{Yang Gao}$^{1,2,3,\dag}$  \quad  \textbf{Tao Zhang}$^{1,\dag}$ \\ \\
  $^1$Tsinghua University\quad $^2$Shanghai Artificial Intelligence Laboratory \\ $^3$Shanghai Qi Zhi Institute \quad $^4$ Colorado School of Mines\\
  $*$ Equal contribution \quad $\dag$ Corresponding author
}
\begin{document}
\maketitle


\begin{abstract}
    Achieving human-like dexterous manipulation remains a crucial area of research in robotics. Current research focuses on improving the success rate of pick-and-place tasks. Compared with pick-and-place, throwing-catching behavior has the potential to increase the speed of transporting objects to their destination. However, dynamic dexterous manipulation poses a major challenge for stable control due to a large number of dynamic contacts. In this paper, we propose a Learning-based framework for Throwing-Catching tasks using dexterous hands (LTC). Our method, LTC, achieves a 73\% success rate across 45 scenarios (diverse hand poses and objects), and the learned policies demonstrate strong zero-shot transfer performance on unseen objects. Additionally, in tasks where the object in hand faces sideways, an extremely unstable scenario due to the lack of support from the palm, all baselines fail, while our method still achieves a success rate of over 60\%.
    Video demonstrations of learned behaviors and the code can be found on the \href{https://dexcatch.github.io/}{supplementary website}.
\end{abstract}


\keywords{Reinforcement Learning, Dexterous Manipulation, System Stability} 


\begin{figure}[h]
  \centering
  \includegraphics[width=0.7\hsize]{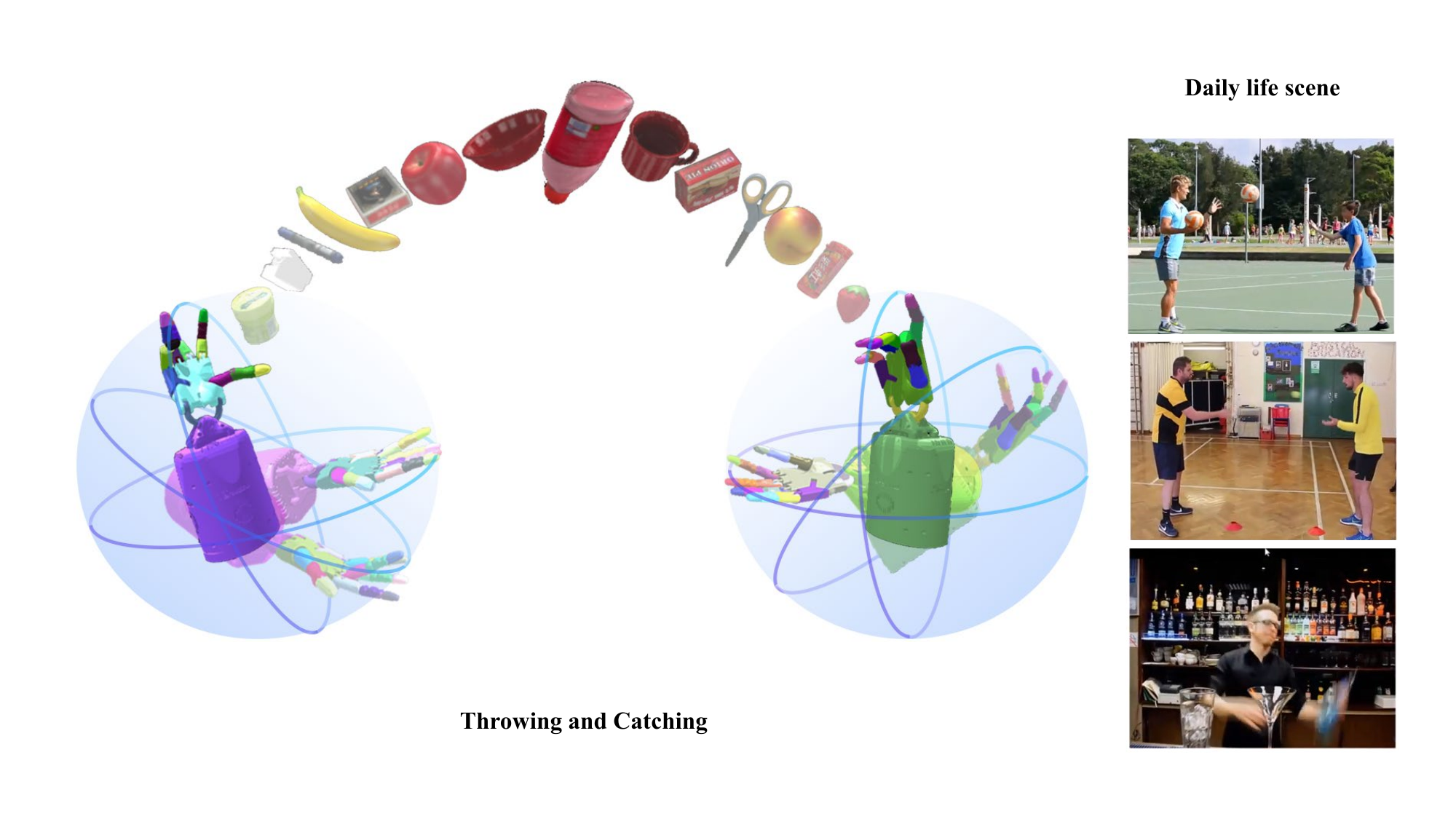}
  \caption{Generalization tasks for throwing and catching objects with shadow hands. Shadow hands can throw and catch a variety of objects with different shapes, masses, and other properties when the hands face sideward.}
  \label{fig1}
  \vspace{-10pt}
\end{figure}

\section{Introduction}
The study of dexterous hands is a widely pursued research area aimed at matching the intricate capabilities of human hands. Researchers employ these hands for delicate manipulations \citep{asfour2008toward, kim2021integrated, bai2014dexterous, kumar2016optimal}. 
Recently, researchers have applied reinforcement learning (RL) techniques to various tasks involving dexterous hands, leading to diverse outcomes \citep{rajeswaran2018learning,andrychowicz2020learning,akkaya2019solving}. They have tackled a series of manipulation tasks, including in-hand reorientation \citep{andrychowicz2020learning,chen2022system}, opening a door \citep{rajeswaran2018learning}, assembling LEGO blocks \citep{jeong2020learning}, and even solving a Rubik’s cube \citep{akkaya2019solving}.
These tasks can be summarized as common behaviors in daily living, where hands pick up an object, reorient it in hand, and then either place it or use it as a tool \citep{chen2022system}. 

While pick-and-place can fulfill the requirements of many daily tasks, dynamic throwing-catching holds the potential for more efficient task execution \citep{zeng2020tossingbot,bauml2010kinematically,namiki2003development}. For example, two robots can employ throwing and catching to efficiently transfer objects instead of pick-and-place, as shown in Fig.\ref{fig1}. Meanwhile, it can extend the working space of robots, which avoids the potential collisions for robot interactions \citep{zeng2020tossingbot}.
However, precisely catching arbitrary objects proves to be highly challenging \citep{charlesworth2021solving}. 1) First, the dynamic contact with strong force makes the object fall from the hand. In previous work \citep{charlesworth2021solving}, model-free reinforcement learning methods achieved nearly 0\% success rate in the throwing-catching task within 30 million environment steps. 2) Secondly, varying object-centric properties (e.g. mass distribution, friction, shape) require the catching behavior robust enough, especially for objects of daily life. To target this problem, previous methods must learn the dynamics of a moving object \citep{kim2014catching}. The additional module requires expert demonstrations and increases the system's complexity. 3) Thirdly, when faced with flying objects coming from random directions, the catching hand must adopt different postures, enabling a larger working space. However, the challenge arises in catching objects with hands facing sideways, where objects easily fall due to the lack of any supporting surface, as illustrated in Fig. \ref{fig1}. Due to this challenge, the task of catching objects with hands facing sideways has not been explored previously.


To address the aforementioned challenges, we propose a \textbf{L}earning-based framework for \textbf{T}hrowing-\textbf{C}atching tasks using dexterous hands (LTC). The simple but effective framework can achieve superior performance across diverse daily objects, in various throwing and catching scenarios where the hands face upward or sideward. The main contributions of this work are as follows. 
\begin{itemize}
    \item We propose a simple and effective framework using a model-free reinforcement learning algorithm, to solve the throwing-catching tasks. Particularly, it can catch the flying objects even on the most challenging catching task with the hands facing sideward. 
    \item We propose a hybrid advantage estimation method, which considers Lyapunov stability. While the effectiveness of the stability condition has primarily been verified for robustness, this is the first time it has been observed to benefit the generation of stable catching behavior for flying objects, thereby improving the accuracy of throwing-catching.   
    \item By using compressed point cloud features of objects as input and applying domain randomization during training, our method excels in generalizing to daily objects with diverse properties (e.g., mass distribution, friction, shape, and initial pose), and performs effectively even on previously unseen objects.
\end{itemize}


\section{Related Work}
The versatile dexterous hand plays a crucial role in performing various tasks across different human-focused settings. However, effectively managing dexterous hand systems poses a challenge due to their complex actions and intricate contact interactions \citep{okamura2000overview}. Traditionally, controllers for manipulation tasks heavily rely on dynamic models and trajectory optimization techniques \citep{asfour2008toward, kim2021integrated,bai2014dexterous,kumar2016optimal}. The paper ``Catching Objects in Flight" is highly regarded in this field \citep{kim2014catching}. To enhance model accuracy, their method learns the dynamics of a moving object and develops a reachable and graspable model using concise demonstrations. Furthermore, Williams \emph{et al.} \citep{williams2015model} achieved success using the Model Prediction Path Integration Control (MPPI) method to skillfully manipulate a cube. Charles Voss \emph{et al.} \citep{charlesworth2021solving} improved the MPPI method to manage task handovers between two hands. However, they focused only on a single object due to the complexities involved in model learning.

In recent times, RL-based approaches simplify the controller design process, and model-agnostic techniques have gained substantial popularity within the control domain. In the realm of dexterous manipulation, numerous endeavors have showcased remarkable advancements compared to conventional controllers \citep{huang2021generalization,petrenko2023dexpbt,chen2022towards}. Entities like OpenAI have harnessed reinforcement learning-based controllers to successfully rearrange blocks and solve Rubik’s cubes \citep{akkaya2019solving}. To expedite training, a blend of reinforcement learning and imitation learning has been employed to facilitate dexterous hands in acquiring diverse skills such as repositioning objects, utilizing tools, opening doors, and more \citep{rajeswaran2018learning}. Acknowledging the existing method's limited ability to generalize, Chen \emph{et al.} \citep{chen2022system} introduced an effective system capable of learning how to reorient a wide array of objects. For the cooperation between two hands, Zakka \emph{et al.} \citep{zakka2023robopianist} and Chen \emph{et al.} \citep{chen2022towards} proposed a series of challenging tasks for bi-manual dexterity, like mastering the piano and lifting the pot. Furthermore, a recent work introduced the Population-Based Training (PBT) algorithm in RL learning, leading to solving the two-handed reorientation task \citep{petrenko2023dexpbt}. However, most recent works \citep{petrenko2023dexpbt,huang2023dynamic} only focus on a single object or a kind of posture of hands. Our method successfully achieves the throwing and catching of diverse objects, especially for the task with hands facing sideward.
\section{Preliminaries}
\subsection{Reinforcement Learning}
The throwing-catching tasks of dexterous hands can be formalized as an infinite horizon Markov Decision Process (MDP), which is defined by a tuple $(S, A, p, R, \rho_{0}, \gamma)$. 
Among them, $S \in \mathbb{R}^n$ is the state space, $A \in \mathbb{R}^m$ is the action space, $p: S \times A \times S \rightarrow [0,1]$ is the transition probability distribution, $r: S \times A \rightarrow \mathbb{R}$ is the reward, $\gamma$ $\in [0,1)$ is the discount factor, and $\rho_0: S \rightarrow [0,1]$ is the initial state distribution. 
The goal of RL is to find an optimal cumulative return of the MDP with the policy $\pi :S \times A  \rightarrow [0,1]$. The problem is formulated as maximizing the expected discounted cumulative return: $\mathbb{E}_{s_0 \sim \rho_0,a_t \sim \pi(\cdotp|s_t),s_{t+1}\sim p(\cdotp|s_t,a_t)}[\sum_{t=0}^{\infty} \gamma^tr(s_t,a_t)]$.

\subsection{Proximal Policy Optimization(PPO)}
The PPO algorithm is an on-policy reinforcement learning approach. It excels in robotics tasks featuring high-dimensional action spaces. The algorithm contains an actor and a critic network, where $\theta$ and $\phi$ denote the parameters of the actor network $\pi_{\theta}(a_t|s_t)$ and the critic network $V_{\phi}(s_t)$, respectively. For the policy function $\pi_{\theta}(a_t|s_t)$, the output corresponds to the mean and standard deviation of the Gaussian distribution. On the other hand, the value function $V_{\phi}(s_t)$ yields the current state's value. Building upon the policy gradient method, the parameter updating formula for the actor-network takes the following form: $\theta_{k+1} = \underset{\theta}{\arg \max} \underset{s_t,a_t\sim\pi_{\theta_k}}{\mathbb{E}}[L(s_t, a_t,\theta_k,\theta)]$, 
\begin{equation}
  \label{eq3}
    L(\cdotp) = 
    \min\left\{
    \begin{aligned}
    &\frac{\pi_\theta(a_t|s_t)}{\pi_{\theta_k}(a_t|s_t)}A^{\pi_{\theta_k}}(s_t,a_t),
    &\mathrm{clip}\left[\frac{\pi_\theta(a_t|s_t)}{\pi_{\theta_k}(a_t|s_t)},1-\epsilon,1+\epsilon\right]A^{\pi_{\theta_k}}(s_t,a_t)
    \end{aligned}
    \right\}
\end{equation}
where $\epsilon$ is a hyperparameter, ${A^{\pi_{\theta_k}}(s_t, a_t)}=G_t^{\pi_{\theta_k}}-V^{\pi_{\theta_k}}_\phi(s_t)$, is the advantage function. $G_t^{\pi_{\theta_k}}$ is cumulative returns.
For parameters $\phi$, the update is expressed as:
\begin{equation}
  \label{eq5}
\begin{array}{l}
\phi_{k+1} = \underset{\phi}{\arg \min} \underset{s_t,a_t\sim\pi_{\theta_k}}{\mathbb{E}}[(G_t^{\pi_{\theta_k}}-V^{\pi_{\theta_k}}_\phi(s_t))^2]\\
\end{array}
\end{equation}

However, because the calculation of $G_t^{\pi_{\theta_k}}$ value needs the whole data of a finished episode, we approximate $G_t^{\pi_{\theta_k}}$ to $r_t+\gamma V^{\pi_{\theta_k}}_\phi(s_{t+1})$ through TD learning method.

\section{Dexterous Catching Tasks} 
Our research centers on the challenges of throwing-catching tasks using dexterous shadow hands. Our aim is for these shadow hands to adeptly handle diverse objects in various positions and execute agile catches using different gestures. To address this objective, our design comprises three key components.

\paragraph{Environment Setting}
The shadow hand models, which emulate human skeletons, possess 24 degrees of freedom (DOFs) and are operated using 20 pairs of agonist-antagonist tendons \citep{chen2022system}. The presence of force sensors on the ten fingertips facilitates the acquisition of force-related data during catching and throwing maneuvers.
In our paper, we investigate five catching scenarios, namely Overarm Catch, Overarm2Abreast Catch, Under2Overarm Catch, Abreast Catch, Underarm Catch and which are shown in Fig. \ref{task_intro} respectively. 
In these five tasks, one hand should throw an object to the other hand, and the other one should hold it in the palm later.
Some tasks with vertical hands are challenging. It is because the object in the hands is not stable without the support from the palm. 
\paragraph{Object-generalized Setting}
Our design objective is to generalize across diverse objects by introducing point cloud features, which are obtained from a pre-trained model, PointClipV2 \citep{zhu2022pointclip}. Additionally, we apply Principal Component Analysis (PCA) to compress the features. The experiments illustrate that the shapes of different objects can be distinguished using the compressed features. More details can be found in Appendix \ref{pc_feature}.
\paragraph{State, Action and Reward}
\textbf{1) State}: The state space consists of 424 elements. Among these, the 422-dimensional state incorporates crucial information: hand position, hand orientation, angular and linear velocities of the hands, position and speed of each hand's degree of freedom, force-torque sensor readings of the hands, target object's position and orientation, as well as the current object's position and orientation. The above observation corresponds to the setting in \citep{chen2022towards}. Additionally, a two-dimensional state features the input derived from the point cloud data of the object. It serves as the perceptual part for the agent regarding the object.
\textbf{2) Action}:
The action space comprises 52 dimensions, encompassing 20 pairs of agonist-antagonist tendons for both hands, along with three dimensions each for the required for
ce and torque of each hand.
\textbf{3) Reward}:
The reward design comprises three components: the difference in position and posture between the object and the target, the penalty for the action scale of the shadow hand joints, and a combination of task completion bonuses and failure costs. We explain the details of the reward in Appendix \ref{rew_app}.
\begin{figure*}[!t]
  \centering
  \includegraphics[width=0.8\hsize]{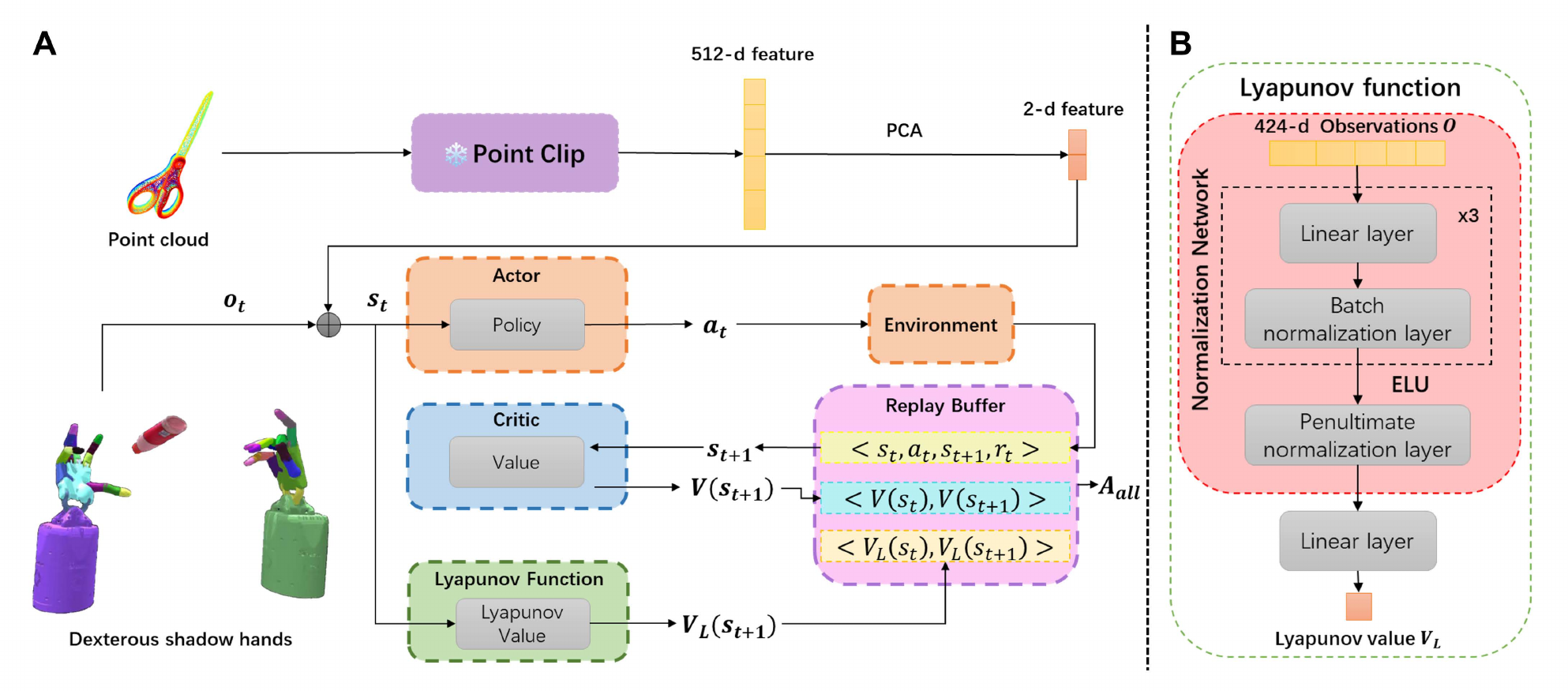}
  \caption{\textbf{A}: The method takes as input the environmental observation and the point cloud feature of the object. Then, it learns the catching policy for dexterous hands through an Actor-Critic structure. \textbf{B}: The Lyapunov function, the policy function and the value function are estimated using neural networks. Their network structures are similar, and the only difference is the dimension of the output. For example, the Lyapunov function contains a linear layer, a batch normalization layer, and a penultimate normalization layer, and the last linear layer takes a scalar as output.}
  \label{fig3}
  \vspace{-10pt}
\end{figure*}

\section{Method}
In this section, we introduce the LTC algorithm in detail. LTC is an end-to-end framework based on the Proximal Policy Optimization (PPO) \citep{schulman2017proximal} algorithm. 
Since the throwing-catching task is a dynamic and agile behaviour, the basic PPO algorithm encounters two key challenges: \textit{How to achieve efficient throwing and stable catching without falling?} We make some algorithmic modifications to the basic PPO algorithm. First, to accelerate the learning of throwing, we design an intrinsic advantage and add penultimate normalization layers in the network. Secondly, a stable catching behaviour is holding the object in the palm without falling the object. As depicted in Fig. \ref{stability}, the maximum sum reward (optimality) can not guarantee the generation of the stable catching behaviour. To encourage more stable catching, we include the Lyapunov stability condition in policy learning. The specific formulation can be found in Section 5.2.1.
 
\subsection{Framework}

\begin{wrapfigure}{r}{7cm}
\begin{center}
\includegraphics[width=0.5\textwidth]{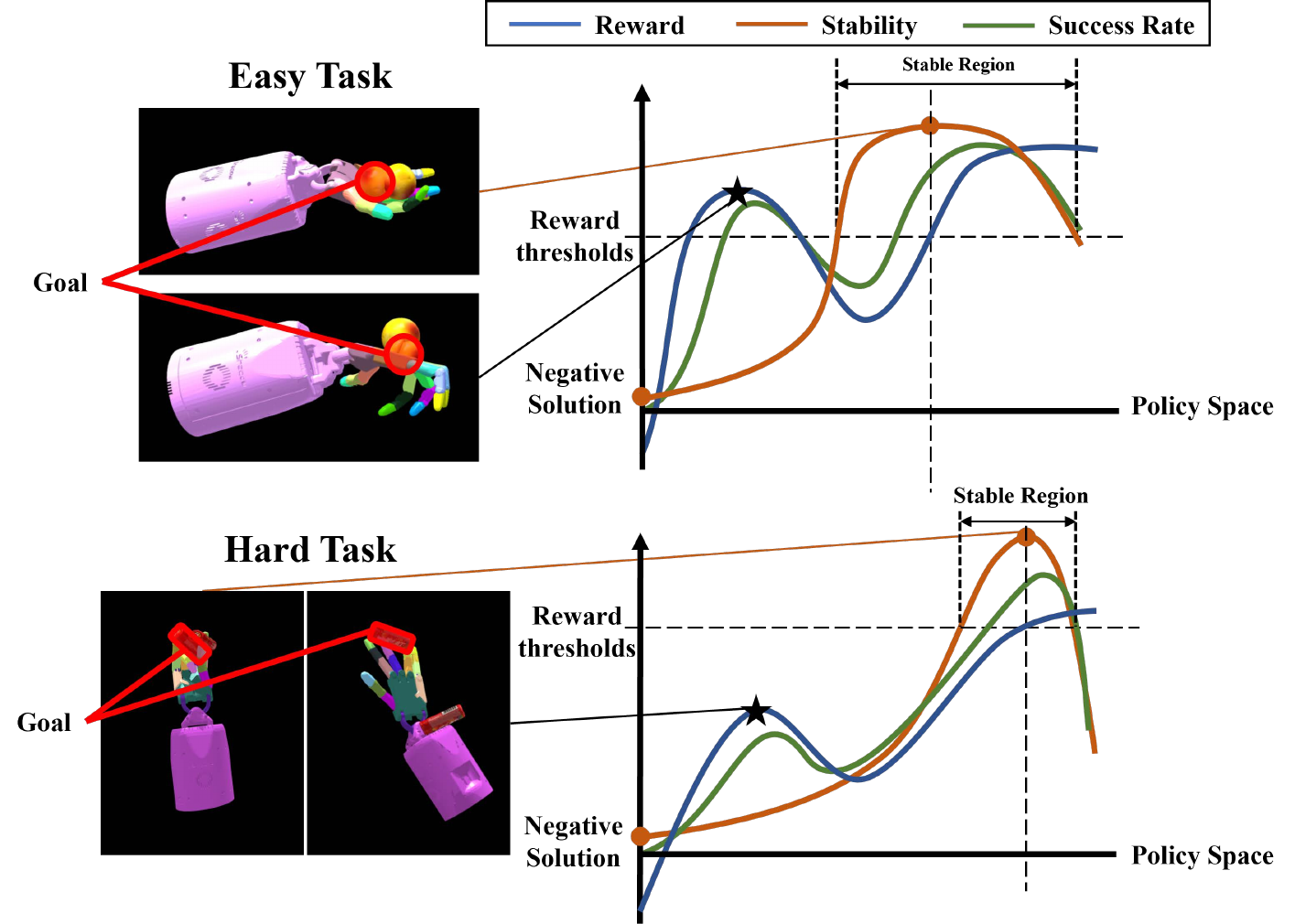}
\end{center}
\caption{\small To enhance the success rate, our approach prioritizes optimizing rewards while ensuring the stability of the grasping gesture. By integrating Lyapunov stability alongside reward optimization, we mitigate the risk of completing tasks in unstable poses. The size of the stable region varies based on task difficulty, with simpler tasks offering a larger stable region.}

\label{stability}
\vspace{-10pt}
\end{wrapfigure}
The architecture is illustrated in Fig. \ref{fig3}. Here's a breakdown of its components:1) Point Cloud Feature Extraction: The initial step involves extracting point cloud features using the PointClipV2 \citep{zhu2022pointclip}. Subsequently, Principal Component Analysis (PCA) is employed to reduce dimensionality, yielding a 2-dimensional feature output. This output serves as the perception state for the shadow hands system; 2) Observation Input: The agent takes both the current observation of objects and the shadow hands and the compressed features from point clouds as input; 3) Actor-Critic Framework: The actor and critic are represented as two neural networks, which is the same as PPO algorithm. 
The network architecture for the Lyapunov function is depicted on the right side of Fig. \ref{fig3}. This structure comprises three linear layers, with a Batch normalization layer and an ELU activation function appended to the output of each layer. Following these, there is a penultimate normalization layer \citep{bjorck2021high}, succeeded by a linear layer to regulate the output dimension. The network structure for the policy and value function are similar to that of the Lyapunov value network.

\subsection{Hybrid Advantage Estimation}

The hybrid advantage $A_{all}$ consists of 3 parts: standard advantage estimation based on PPO, stability advantage estimation, and intrinsic advantage estimation. The formula for the hybrid advantage is expressed as follows:
\begin{equation}
\label{eq10}
\begin{array}{cc}
     A_{all}(s_t,a_t,s_{t+1}) = \beta_1 A^{\pi_{\theta}}(s_t,a_t) + \beta_2 A_{L}(s_t,s_{t+1}) + \beta_3 A_{I}(s_t,a_t)\\
\end{array}
\end{equation}
\begin{equation}
\label{eq11}
    \beta_1+\beta_2+\beta_3 = 1
\end{equation}
where $A^{\pi_{\theta}}(s_t,a_t)$ refers to the time differential advantage function of the PPO algorithm, $A_{L}(s_t,a_t)$ represents the Lyapunov stability advantage function, $A_{I}(s_t,s_{t+1})$ represents the intrinsic advantage function, and $\beta_1,\beta_2,\beta_3 \in [0,1]$ are hyperparameters. As follows, we introduce the Lyapunov stability advantage and intrinsic advantage in detail. 

\subsubsection{Lyapunov Stability}

 \begin{figure*}[!t]
  \centering
  \includegraphics[width=0.9\hsize]{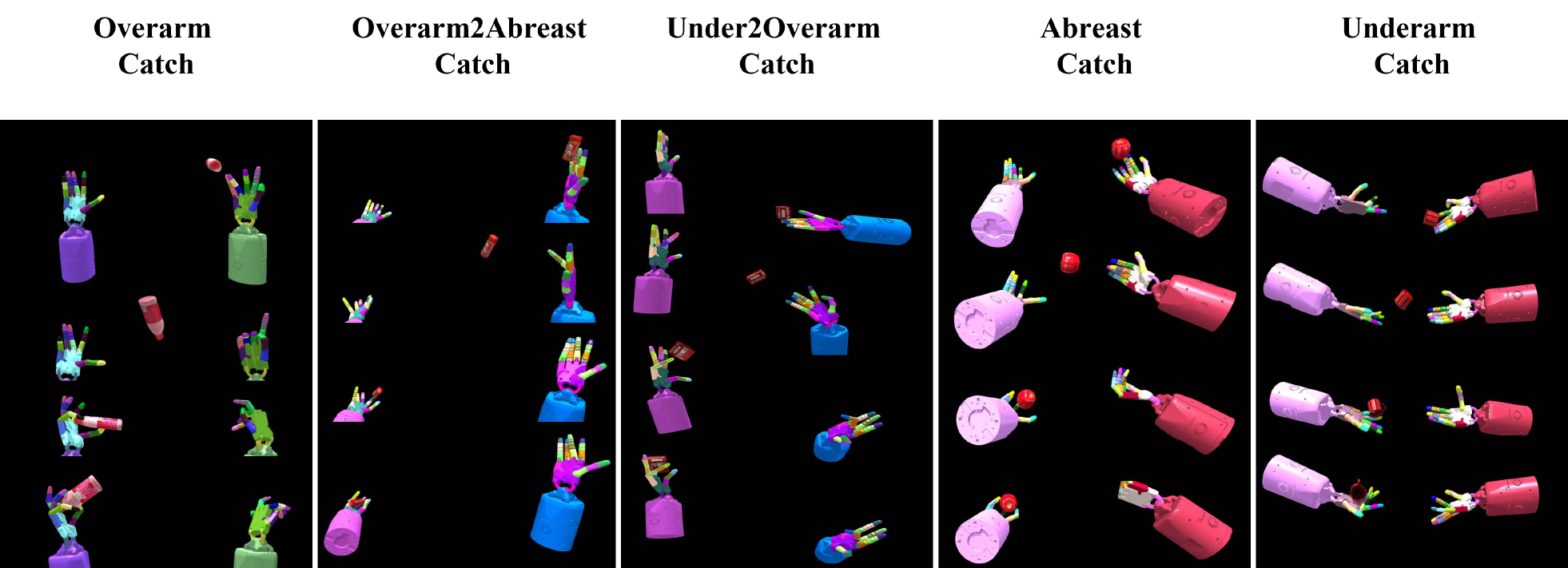}
  \caption{ Video snapshots of throwing-catching tasks performed by our method. There exist five tasks with different postures of hands, in order from difficult to easy: Overarm Catch, Overarm2Abreast Catch, Under2Overarm Catch, Abreast Catch, and Underarm Catch. The most challenging task is Overarm Catch because the hands are both oriented vertically. Notably, our method can catch diverse objects in five tasks.}
  \label{task_intro}
\end{figure*}

The Lyapunov function serves as a potent tool for assessing system stability.
One approach, known as the self-learning Lyapunov stability method \citep{chang2021stabilizing}, is utilized in our paper. 
First, we introduce a Lyapunov stability critic network, which represents the candidate Lyapunov function $V_{L}$ using neural networks. Subsequently, the empirical Lyapunov risk $\mathcal{R}(\Lambda)$ \citep{chang2021stabilizing} is defined as per Eq. \eqref{Lya_risk}. The three parts in Eq. \eqref{Lya_risk} represent three requirements for the Lyapunov function \citep{chang2021stabilizing}. The parameter update is then executed by minimizing the loss function associated with the Lyapunov risk $\mathcal{R}(\Lambda)$, as part of achieving the desired system stability.
\begin{equation}
  \label{Lya_risk}
  \begin{aligned}
        \mathcal{R}(\Lambda) = \mathbb{E} [\max(-V_{L}(s_t),0) + \max(0,L_fV_{L}(s_t)) + V_{L}^2(0)]   
  \end{aligned}
\end{equation}
where $f$ is the dynamical system, $s_t$ is the state at step $t$. The empirical Lyapunov risk is non-negative, we approximate the Lie derivative $L_f V_L(s_t)$ by finite difference of the system's sampling trajectory because of the complexity of modeling this system's dynamics:
\begin{equation}
  \label{eq8}
  \begin{array}{cc}
    L_f V_L(s_t) = \frac{1}{\Delta t}(V_L(s_{t+1})-V_L(s_t)) 
   \end{array}
\end{equation}
where $\Delta t$ is the time difference between them.
Since the Lyapunov function is considered to be another value function, the Lie derivative term $L_{f}V_L(s_t)$ is the only term in the Lyapunov risk affected by the policy. Therefore, an additional goal of ensuring $L_{f}V_L(s_t)<0$ is added to the policy optimization, to encourage the policy's update with the requirement of stabilization. However, the classical constraint $L_{f}V_L(s_t)<0$ makes the value of the Lyapunov function likely to be stuck in a minimal number close to 0. This drawback hinders the process of policy improvement due to the lack of an explicit gradient.

Therefore, we design a more strict constraint $L_{f_{\pi_\theta}}V_L(s_t)< -k V^{\alpha}_L(s_t)$, where $\alpha$ and $k$ are constants ranging from 0 to 1. Note that $k V^{\alpha}_L(s_t)$ is non-negative. When $k V^{\alpha}_L(s_t)$ is larger, an intuitive phenomenon is that $V_L$ approaches 0 faster. In other words, the system can reach a stable region in fewer steps, which benefits the generation of stable catching behaviors. More details can be found in Appendix \ref{lya}.
Hereafter, we design the part of the Lyapunov stability advantage function based on the clipped Lie derivative term:
\begin{equation}
  \label{eq9}
A_{L}(s_t,a_t) = \min(k V^{\alpha}_L(s_t) ,-L_{f_{\pi_\theta}}V_L(s_t))
\end{equation}

\subsubsection{Intrinsic Advantage}

Based on the advantage function of the PPO algorithm, in order to improve the ability of the system efficiently, we design the intrinsic advantage function with reference to the intrinsic reward mechanism \citep{li2022phasic}. For a Markov decision process trajectory, different values $V_{\phi}(s_t)$ will affect the success of the final result of this trajectory.
To provide a specific example, let's consider the fact that the next states $s_{t+1}$ exhibit varying conditions, leading to different values $V_{\phi}(s_{t+1})$. It's worth noting that in a successful process, the value $V_\phi(s_{t+1})$ tends to be higher. Consequently, within a Markov decision process, we can further refine the advantage function $A(s, a)$ using empirical measures. If an action is effective, it should result in a higher value, as represented by $V_\phi(s_{t+1}) - V_\phi(s_t) > 0$. This signifies that action $a_t$ transitions to a state with a higher success rate. Conversely, an unfavorable action can lead to a decrease in value, reflected as $V_\phi(s_{t+1}) - V_\phi(s_t) < 0$.
In order to reduce failure in the early stage of the throwing process, we consider the intrinsic advantage $A_{I}(s_t,s_{t+1})$ as part of the total advantage function to accelerate the RL training, shown as follows:
\begin{equation}
  \label{eq6}
  \begin{array}{cc}
    A_{I}(s_t,s_{t+1}) = \min(k(V_\phi(s_{t+1}) - V_\phi(s_t)), 0)
  \end{array}
\end{equation}
where $k$ is a positive constant.


\section{Experiments}
To test the algorithm's ability to generalize, we train and test it with objects from the YCB$\_$Pybullet dataset \citep{calli2017yale}. The results show that our algorithm outperforms baseline methods. Four essential components of our algorithm improve its performance.
Additionally, our method showcases the ability to generalize different gestures across five skilful throwing-catching tasks, effectively allowing for the successful manipulation of various objects. Importantly, our approach extends its capabilities to catch unseen objects successfully.
\begin{figure*}[!t]
  \centering
  \includegraphics[width=0.85\hsize]{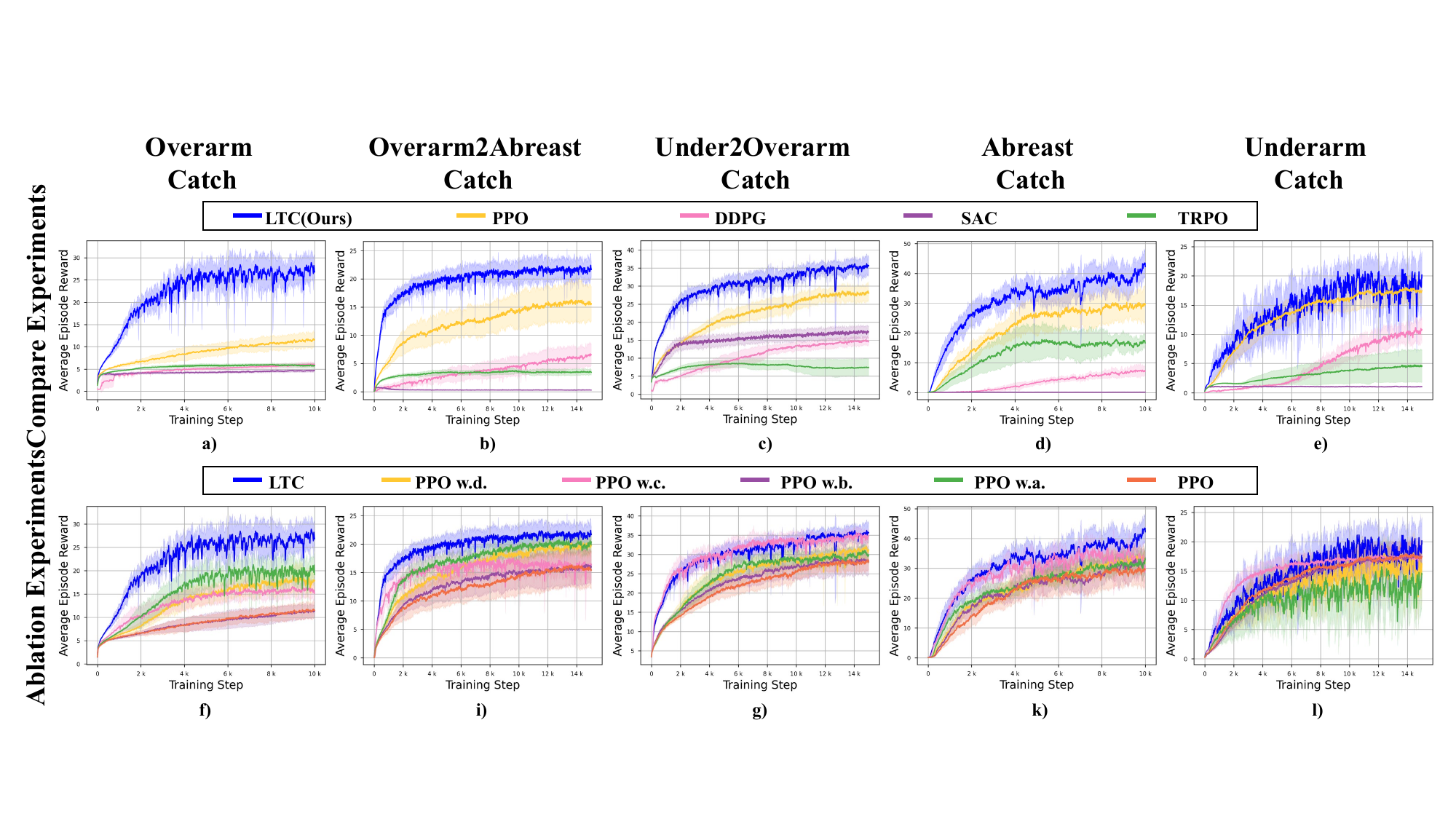}
  \vspace{-20pt}
  \caption{ The average reward curve for training multiple objects for five different tasks. The first row is the comparison experiment reward graph. The second row is an ablation experiment reward graph. (Curves smoothed report the mean 5 times, and shadow areas show the variance.) }
  \label{fig_experiment_lines}
\end{figure*}

\subsection{Comparison Experiments}

In this section, we conduct a comparative analysis of our algorithm against mainstream reinforcement learning (RL) algorithms in dexterous hands' throwing-catching tasks. These algorithms include TRPO \citep{schulman2015trust}, SAC \citep{haarnoja2018soft}, DDPG \citep{lillicrap2015continuous}, and PPO \citep{schulman2017proximal}. 
The results, as illustrated in Fig. \ref{fig_experiment_lines}, are quite telling. In the most challenging task, the Overarm Catch, our algorithm significantly outperforms the baseline methods. For the Over2abreast Catch, Under2Overarm Catch, and Abreast Catch tasks, our approach demonstrates superior learning efficiency in acquiring throwing and catching skills. Furthermore, even in the relatively easier task, Underarm Catch, our method exhibits slightly improved learning efficiency and performance compared to the baseline algorithms. 
However, the variance of the training curve is larger than those baselines in Underarm Catch. This is because simpler tasks have more gestures to complete tasks (see Fig. \ref{stability}), leading to numerous feasible solutions. Thus, the strong exploration of our method results in a high variance of the sum of rewards. Conversely, complex tasks feature limited gestures, resulting in minimal reward variance post-completion.

Video snapshots illustrating the performance in five tasks can be seen in Fig. \ref{task_intro}. It's worth emphasizing that we train the model using nine objects with diverse shapes and mass distributions. This showcases the algorithm's robust generalization ability, as it can adapt to various objects and perform tasks efficiently and reliably. 

\subsection{Ablation Experiments}

We evaluate the impact of four crucial components in our algorithm. To clarify, here are their definitions: $\textbf{a.}$ indicates the addition of a Lyapunov stability component. $\textbf{b.}$ indicates the addition of an intrinsic advantage component. $\textbf{c.}$ indicates network structure parameter normalization, and $\textbf{d.}$ indicates that the compressed point cloud feature of the object is added as a prior to the state. The effect of these components is shown in the reward curve in Fig. \ref{fig_experiment_lines}. It can be seen from the figure that each component of our method benefits the performance compared with the PPO algorithm. The inclusion of Lyapunov stability ($\textbf{a.}$) in the baseline PPO notably enhances the hand's catching ability, particularly accentuated during complex tasks like the Overarm Catch. There exists a more detailed explanation of ablation experiments in Appendix \ref{ablation_app}.

\subsection{Generalization Ability}
\begin{table*}[t]
  \centering
  \renewcommand\arraystretch{1.5}
  \caption{Success rate of Task}
  \label{tab_exp}
  \resizebox{0.9\textwidth}{!}{
      \begin{threeparttable}
          \begin{tabular}{c|c|c|c|c|c|c|c|c|c}
            \hline
            \multirow{3}{*}{\normalsize{\bf Task}} & \multicolumn{9}{c}{\normalsize{\bf Success Rate ($\%$)}} \\
            \cmidrule(r){2-10}
            ~ & \multicolumn{4}{c|}{\bf Train (9 objects)} & \multicolumn{5}{c}{\bf Test (11 objects)} \\
            \cmidrule(r){2-10}
            ~ & \bf META & Apple(regular) & Banana(elongated) & Poker(flat) & \bf META & Stapler & Scissor & Washer & Bowl \\
            \hline
            Overarm & \bf 60.30 & 63.62 & 73.51 & 73.53 & \bf 67.58 & 69.71 & 56.83 & 76.00 & 77.34 \\
            Overarm2Abreast & \bf 68.77 & 79.01 & 82.89 & 68.76 & \bf 76.79 & 81.24 & 83.90 & 67.42 & 86.96 \\
            Under2Overarm & \bf 73.95 & 75.35 & 82.21 & 78.79 & \bf 73.41 & 76.85 & 68.46 & 57.34 & 81.88 \\
            Abreast & \bf 72.95 & 67.41 & 88.65 & 86.34 & \bf 77.89 & 94.84 & 57.52 & 83.94 & 83.59 \\
            Underarm & \bf 88.99 & 95.87 & 86.50 & 90.57 & \bf 80.47 & 84.10 & 63.63 & 63.10 & 83.23 \\

            \hline
          \end{tabular}
          \begin{tablenotes}    
               
                \item[1] After dividing the data into training and test sets, we perform distinct success rate tests(10k times) for single objects and multiple objects. Specifically, we only display the success rate of three typical objects within the training set and four typical objects within the test set. We utilize a reward threshold to ascertain the success rate. In the Under2Overarm Catch task, success is determined if the reward surpasses 20 because of the longer execution process, while for other tasks, a reward exceeding 15 indicates success.          
            \end{tablenotes}            
      \end{threeparttable}
  }
\end{table*}
We evaluate our method in the throwing-catching task using various objects, quantifying the success rate as a key metric. 
We run the testing task in 10k episodes and use the average reward threshold to determine whether it is successful. Since our algorithm pays more attention to the efficiency and stability of generalization ability learning, we no longer use the distance threshold of the final state from the target position as the success rate. The success rate of different objects is illustrated in the Table \ref{tab_exp}. Among them, we use 11 unseen objects (test set) to verify the generalization ability of our method. Specifically, we adopt two methods to test objects in the training and test sets: i. Test the success rate of multiple objects in parallel. ii. Test the success rate of each object individually. We take the most challenging task, Overarm Catch as an example. The average success rate for the training set objects reaches $60.30\%$, and the highest of individual tests on each object was pen up to $81.23\%$. In addition, the average success rate of the test set object is close to $67.58\%$. Most importantly, our method obtains a good success rate, $56.83\%$, in testing scissors, which is a new but difficult object. Furthermore, we can observe that our method can provide a robust catching strategy because there is not a significant gap between the success rates of training and test objects.
The results provide strong evidence of our algorithm's robust generalization capability for catching diverse objects. 
We perform a more detailed analysis of the generalization ability in Appendix \ref{general_app}.

\section{Conclusion}

\textbf{Summary}: In this study, we introduce an end-to-end framework (LTC) based on model-free reinforcement learning to tackle throwing-catching tasks with dexterous hands. Remarkably, our method demonstrates exceptional generalization ability, even for unseen objects, and effectively handles scenarios where the hands face sideways or other postures. This research underscores the potential of RL-based approaches in improving object transportation efficiency through dynamic manipulation with dexterous hands.

\textbf{Limitation}: Addressing the randomness for more complex objects or the initial position and posture of the hands is a key challenge for our method. To enhance robustness and success rate, we plan to iteratively refine task randomness by combining curriculum learning methods. Furthermore, we intend to optimize the motion and dynamic control parameters through adaptive parameter learning methods.
Additionally, we've constructed a robot hardware platform comprising a mobile chassis, a 7-DoF robotic arm, and an allegro hand. Through the meticulous design of domain randomization terms, we hope to achieve zero-shot simulation-to-real transfer and execute throwing-catching tasks in real-world settings.


\bibliography{main}  

\appendix

\section{Implementation detail}

In our experiments, we conduct training and testing on a platform equipped with an RTX A6000 GPU and an AMD CPU. We utilize nine objects for training across five different task types and test the algorithm with four objects. Our approach also leverages meta-learning for multi-object parallel training. The objects used in our experiments are sourced from the YCB dataset \citep{calli2017yale}. Due to the limitations of the platform, we employ 64 parallel environments per object for the Abreast Catch task and 256 parallel environments for others. The policy updates every 8 environment steps, and the length of each episode is set to 100 steps.
Furthermore, every experiment is built using early reset functions mentioned in Appendix \ref{pc_feature}.
To determine the reward threshold, we randomly sampled 50 experimental environments during the testing phase. By visualizing and analyzing the corresponding reward values after successfully catching the objects, we calculated the average and rounded it to establish the reward threshold.

\section{Object-generalized design}
\label{pc_feature}
\begin{wrapfigure}{r}{7cm}
\vspace{-20pt}
\begin{center}
\includegraphics[width=0.4\textwidth]{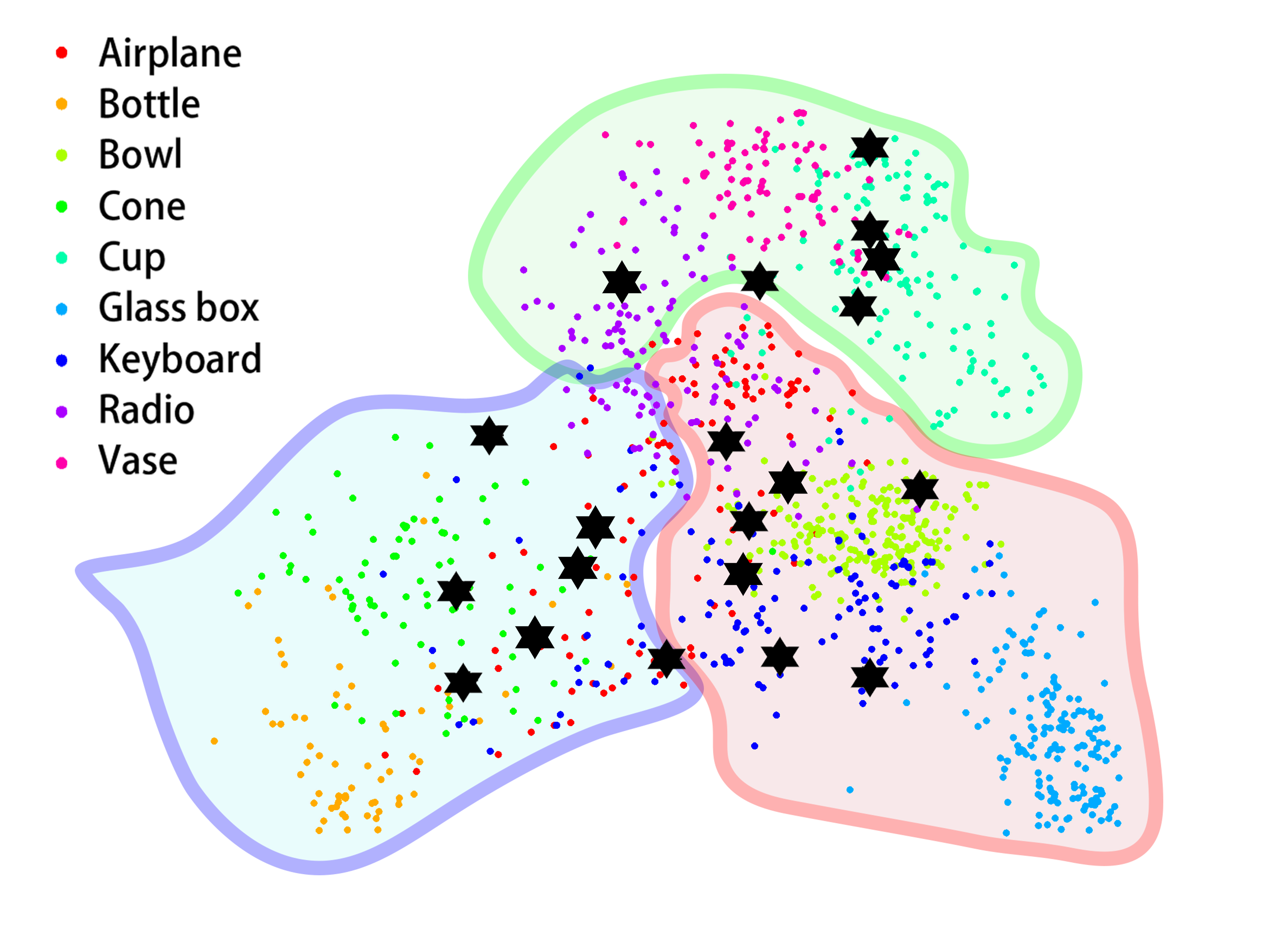}
\end{center}
\caption{\small The features of different point clouds are extracted by PointClipV2 \citep{zhu2022pointclip}, then reduced by PCA, and finally visualized in two dimensions. The Clustering Using Representatives(CURE) method shows that any object is approximately divided into three categories according to different shapes, extending in three directions, respectively, elongated, flat, and regular. Sample of objects in each category are bottles, keyboards, and cups, respectively.}
\label{fig2}

\vspace{-10pt}
\end{wrapfigure}

The 3D point cloud feature of an object represents its shape, created by densely gathered points in 3D space. We adopt point clouds to approximate object features as prior information. Our approach involves feature extraction via a point clouds processing network utilizing PointClipV2 \citep{zhu2022pointclip}. This algorithm facilitates zero-shot classification segmentation to identify features.
To reduce computational complexity, the extracted feature map degrades to two dimensions using Principal Component Analysis (PCA). Our visual analysis has substantiated that these 2-dimensional features adequately capture the distinct attributes of various objects, as demonstrated in Fig.\ref{fig2}. 
The distribution of various objects exhibits three distinct directions, leading to their classification into three categories. Elongated objects, like bottles, fall into the first category. Flat objects, such as keyboards, form the second category. Regular objects, like cups, which do not fit into the previous categories, are classified under the third category.

Furthermore, to enhance the algorithm's ability to generalize across various physical properties of objects, we broaden the training dataset by randomly generating mass and inertia values within a reasonable range for the same object. This approach increases the diversity of physical properties encountered during training. In addition, we reset the environment when the object falls under a threshold or gets out of some bounds. The process enhances the diversity of effective data in the early stage of training.  

During the training, we leverage the IsaacGym simulator \citep{makoviychuk2021isaac}, which enables parallel training, allowing the system to learn the skill of catching various objects simultaneously.

\begin{wraptable}{r}{7cm}
  \centering
  \vspace{-20pt}
  \renewcommand\arraystretch{0.3}
  \caption{Reward Desgin}
  \label{tab1}
\resizebox{.45\columnwidth}{!}{
      \begin{threeparttable}
          \begin{tabular}{c|c}
            \hline
            $r_{p}$ & ${||p_{g}-p_{c}||_2}$ \\ 
            $r_{o}$ & ${2\arcsin(\frac{||q_{d}||_2-||q_{d}||_{min}}{||q_{d}||_{max}-||q_{d}||_{min}})}$ \\
            $r_{a}$ & ${- \sum a_{c}^2}$ \\
            $r_{succ}$ & $c_1$ \\
            $r_{fall}$ & $c_2$ \\
            $r_{total}$ & $r_{p}+r_{o}+r_{a}+r_{succ}-r_{fall}$ \\
            \hline
          \end{tabular}
          \begin{tablenotes}    
            \tiny               
            \item[1] The reward is composed of five parts, involving the position, posture of the object, task success or failure, and smooth actions.         
        
        \end{tablenotes}            
      \end{threeparttable}
  }
\vspace{-25pt}
\end{wraptable}

\section{Reward design}
\label{rew_app}
The reward is composed of five elements: 1) $r_{p}$ quantifies the distance between the position of the goal object $p_{g}$ and the current object $p_{c}$. 2) $r_{o}$ represents the angular discrepancy between the target object's orientation and the current object's orientation, utilizing quaternions for accurate angular measurement. 3) $r_{a}$ evaluates the smoothness of joint movement through the current actions $a_{c}$. 4) A constant reward $c_{1}$ is given for successfully completing tasks $r_{succ}$. 5) A constant penalty $c_{2}$ is imposed for task failure $r_{fall}$.
Specifically, the smoothness of joint movement is controlled by negatively rewarding the sum of squares of action quantities as per the strategy. The overall reward is expressed $r_{total}$ in Table \ref{tab1}, where target position distance is calculated using Euler distance, angular distance is determined using quaternions, joint smoothness is penalized using action quantities' sum of squares, and success and failure rewards are assigned as positive and negative constants, as outlined in Table \ref{tab1}.

\section{Details of Lyapunov Stability}
\label{lya}
We design a stricter constraint: $L_{f_{\pi_\theta}}V_L(s_t)< -k V^{\alpha}_L(s_t)$, where $\alpha$ and $k$ are constants ranging from 0 to 1. Note that $k V^{\alpha}_L(s_t)$ is non-negative. This constraint avoids the issue of the Lyapunov function having very small values, thus speeding up the convergence of the stability part in policy improvement.

Our idea is inspired by a classical theorem on finite-time stability convergence in a continuous-time system \citep{2003Continuous}.
\begin{lemma}[Finite-Time Stability Convergence]
\label{app:ftt}
In a continuous-time system, the system can be stable within a finite time $T\leq \frac{V_L^{1-\alpha}(s_0)}{k(1-\alpha)}$, if the following condition holds.
\begin{equation}
    \label{app:ftt-core}
    \dot{V}_L\leq - k V_L^\alpha
\end{equation}
Note that $s_0$ is the initial state, $\dot{V}_L$ is the Lie derivative $L_f V_L$, $k$ and $\alpha$ are constants ranging from 0 to 1. 
\end{lemma}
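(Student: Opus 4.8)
The plan is to treat the hypothesis $\dot{V}_L \leq -k V_L^\alpha$ as a scalar differential inequality and reduce it to an exactly solvable ODE via a comparison argument. First I would introduce the comparison function $W(t)$ solving the associated equality $\dot{W} = -k W^\alpha$ with initial condition $W(0) = V_L(s_0)$, and invoke the standard comparison lemma to conclude $0 \leq V_L(t) \leq W(t)$ for as long as $W(t) > 0$. The positive definiteness built into any candidate Lyapunov function (the first term of the risk in Eq.~\eqref{Lya_risk} enforces $V_L \geq 0$) supplies the lower bound, so that controlling $W$ suffices to control $V_L$.

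Next I would solve the comparison ODE explicitly by separation of variables. Since $0 < \alpha < 1$, dividing by $W^\alpha$ and integrating gives $\frac{d}{dt}\bigl(W^{1-\alpha}\bigr) = -k(1-\alpha)$, so that $W^{1-\alpha}(t) = V_L^{1-\alpha}(s_0) - k(1-\alpha)t$. The right-hand side is affine and strictly decreasing in $t$, reaching zero exactly at $T = \frac{V_L^{1-\alpha}(s_0)}{k(1-\alpha)}$. Because $V_L(t) \leq W(t)$ while $V_L \geq 0$, this forces $V_L(T) = 0$; the settling time of the true trajectory can only be smaller, yielding the claimed bound $T \leq \frac{V_L^{1-\alpha}(s_0)}{k(1-\alpha)}$.

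The main obstacle is precisely the step where finite-time, rather than merely asymptotic, convergence emerges, and it hinges on the exponent $\alpha < 1$. I would emphasize that the vector field $-k W^\alpha$ fails to be Lipschitz at $W = 0$, which is exactly what permits the solution to reach the origin in finite time instead of decaying exponentially; for $\alpha = 1$ the same computation would instead give $W(t) = V_L(s_0)e^{-kt}$, which never vanishes. Consequently, care is needed to justify that once $V_L$ attains $0$ it remains there, i.e.\ that the origin is an equilibrium and the trajectory does not dip below it. This follows from the Lyapunov structure but should be stated explicitly, since the comparison bound alone only governs $V_L$ up to the first instant at which $W$ vanishes.
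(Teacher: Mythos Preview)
Your proposal is correct and follows essentially the same route as the paper: both arguments separate variables in the scalar inequality $\dot{V}_L \leq -k V_L^\alpha$ and integrate to obtain $V_L^{1-\alpha}(t) \leq V_L^{1-\alpha}(s_0) - k(1-\alpha)t$, from which the finite settling-time bound drops out. The paper simply integrates the inequality directly without naming a comparison function or discussing the non-Lipschitz subtlety at the origin, so your version is a more careful rendition of the same computation rather than a different method.
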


\begin{proof}
    
First, we perform an integral operation on both sides of the condition.
\begin{equation}
    \label{eqA2-2}
    \int \frac{\dot{V}_L}{V_L^\alpha} dt \leq \int -k dt
\end{equation}
Then, we can obtain 
\begin{equation}
    \label{eqA2-3}
    \begin{split}
           \frac{1}{\alpha-1}\frac{-1}{V_L^{\alpha-1}}|^T_0  &\leq -kt|^T_0 \\
           \frac{1}{\alpha-1} (V_L^{1-\alpha}(s_0)) &\leq -kT \\
           \frac{V_L^{1-\alpha}(s_0)}{1-\alpha} & \geq kT 
    \end{split}
\end{equation}
where $s_0$ is the initial state. 
Finally, the convergence time $T$ can be represented as:
\begin{equation}
    \label{eqA2-10}
    T \leq \frac{V_L^{1-\alpha}(s_0)}{k(1-\alpha)}
\end{equation}
\end{proof}

Compared with $\dot{V}_L \leq 0$, the following lemma accelerates the convergence time from infinity to finite time. We can provide an intuitive explanation. Equation \ref{app:ftt-core} guarantees that the derivative of the Lyapunov function $V_L$ is constantly less than a large negative value ($- k V_L^\alpha$) instead of 0. Thus, we can similarly use the condition of Equation \ref{app:ftt-core} in our method. This process not only facilitates the policy in finding behaviors with better stability properties but also avoids the values of the Lyapunov function being close to 0 in the entire state space. 
For the choice of $\alpha$ and $k$, we set $\alpha$ to 0.7 and increase $k$ linearly from 0.1 to 1 according to the training steps. The small $k$ in the early stage can alleviate the collapse of the model.

\section{Ablation Experiment Analysis}
\label{ablation_app}
The inclusion of Lyapunov stability ($a.$) in the baseline PPO notably enhances the hand's catching ability, particularly accentuated during complex tasks like the Overarm Catch. 
By guiding dexterous hands to avoid unstable gestures and focusing on learning stable catching motions, we achieve higher success rates and rewards.
It is worth noting that in the simplest task, the Underarm Catch, the effect of Lyapunov stability is limited. As seen in Fig.\ref{stability}, the task's stability region is large, thereby making it hard to balance between stability and rewards. 
Compared with PPO, it is more challenging to explore optimal solutions for success rate.

\label{general_app}
\begin{wrapfigure}{r}{7cm}
\vspace{-20pt}
\begin{center}
\includegraphics[width=0.45\textwidth]{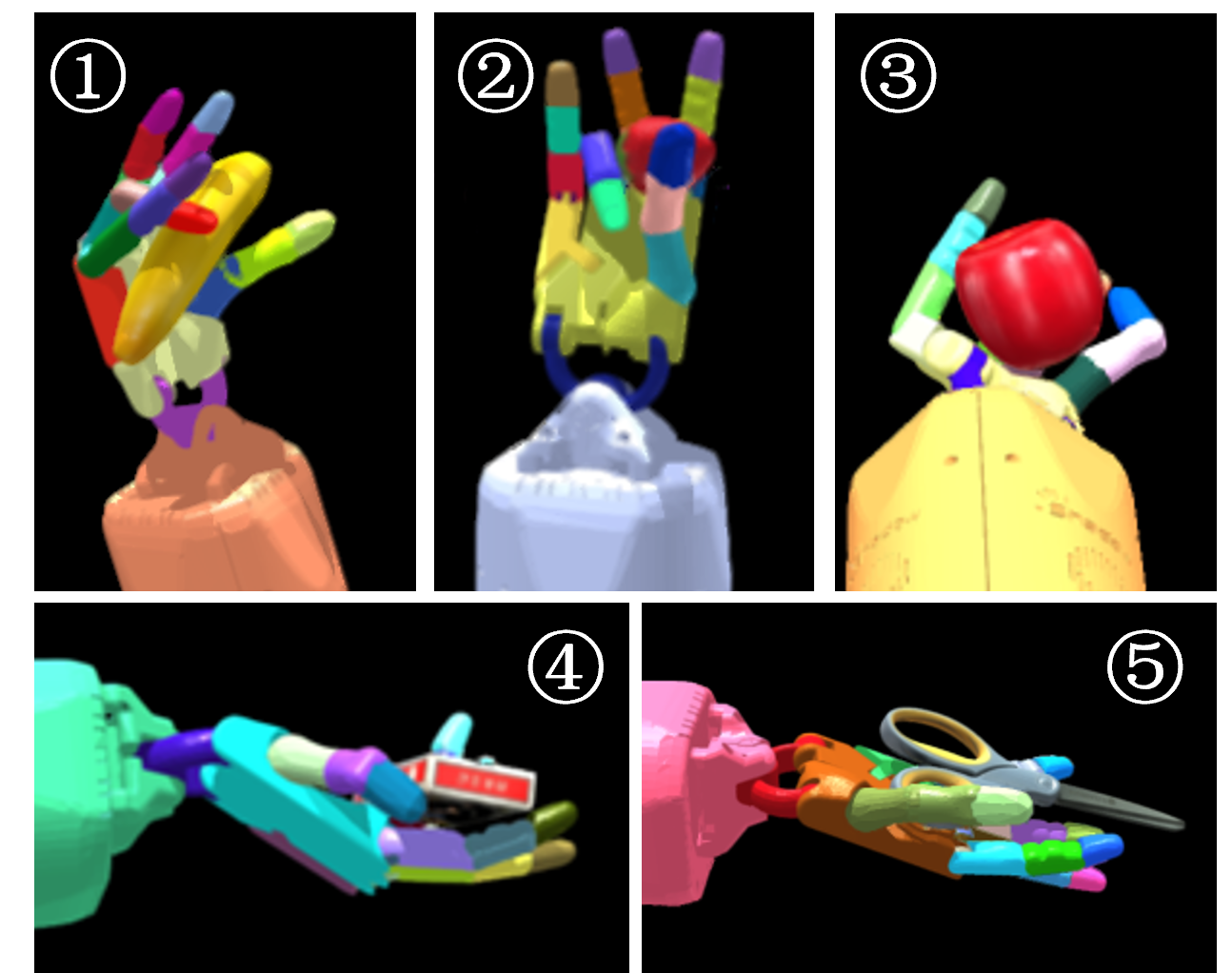}
\end{center}
\caption{\small For the regular, flat, and elongated classes of objects, the model generalizes three different stable catching strategies for unseen objects in each of the three tasks. For regular objects such as apples and strawberries, it uses half-grip (\ding{174}) or pinching gestures (\ding{173}). A full-grip gesture (\ding{172}) is used for slender objects such as bananas. For flat objects such as scissors and pokers, the hand can hold them directly in the palm (\ding{175},\ding{176}).}
\label{three_gestures}
\vspace{-10pt}
\end{wrapfigure}

The impact of increasing intrinsic advantage ($b.$) is observed primarily during the early stages of learning. While its effect may not be substantial, it demonstrates the capability to expedite early learning and convergence across multiple tasks.
 Parameter normalization ($c.$) significantly enhanced the ability of the throwing process.
 The normalization layer accelerates the early convergence of the model due to a more stable gradient. The stable gradient update benefits searching the optimal point, particularly in tasks with relatively simple throwing processes. 
 It performs well in tasks where the success of the complete throwing-catching task depends heavily on the throwing process, such as Under2Overarm Catch, Underarm Catch, and Abreast Catch.
The incorporation of the object's point cloud feature ($d.$) simultaneously considers both the throwing and catching processes, offering valuable cues for each. This enhancement notably boosts the algorithm's performance by improving convergence speed and task success rates. 
Given that the object's shape affects the catching, this feature significantly improves tasks with challenging catching behavior like Overarm Catch. Because the throwing-catching behavior is insensitive to the object's shape in Underarm Catch, adding point cloud to extend state space harms the performance slightly. 

\begin{wrapfigure}{r}{7cm}
\vspace{-20pt}
\begin{center}
\includegraphics[width=0.45\textwidth]{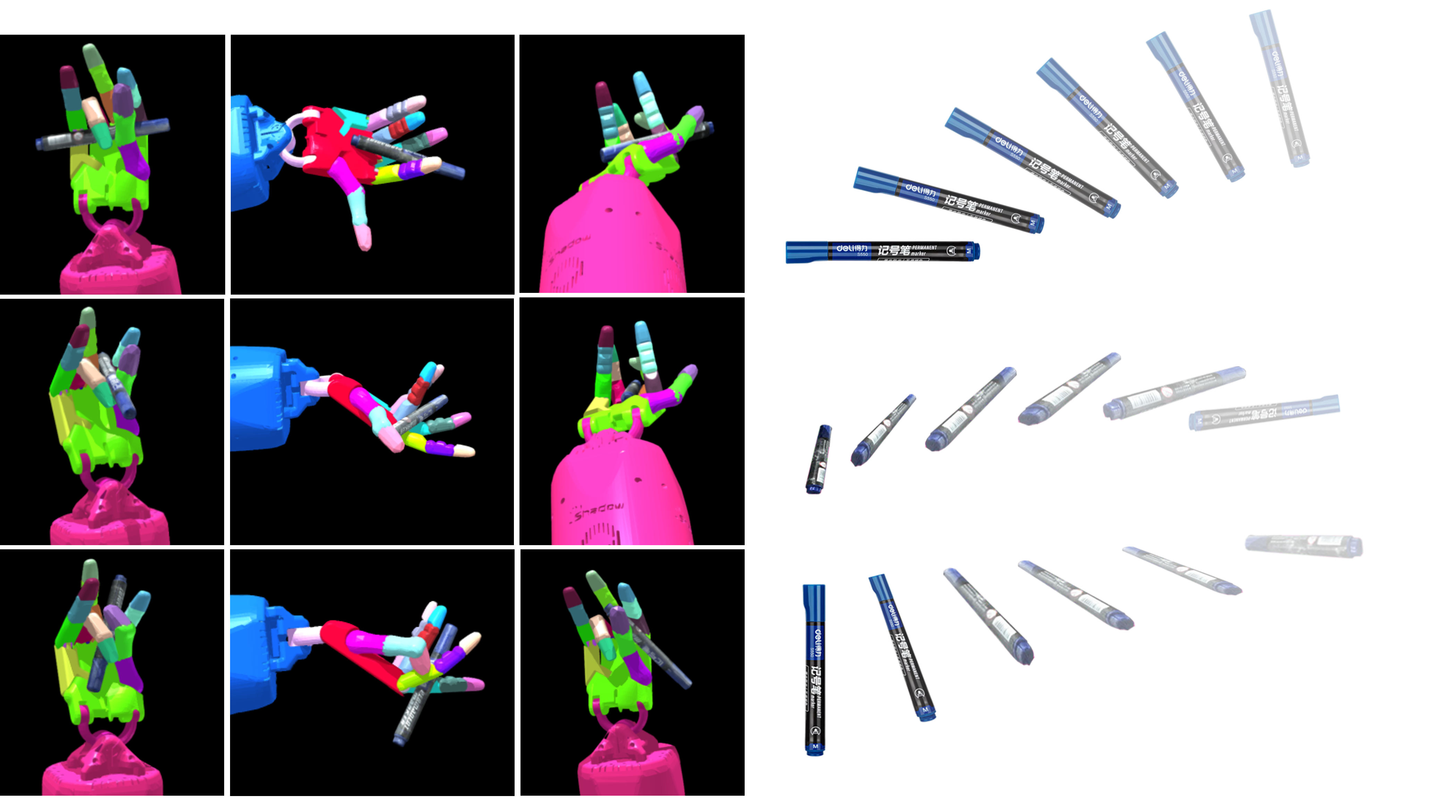}
\end{center}
\caption{\small When the pen is oriented obliquely or tumbling because of the randomness in the initial posture, the shadow hand can generalize various grasping gestures to successfully complete the tasks.}
\label{same_object_gestures}
\vspace{-10pt}
\end{wrapfigure}

\section{Generalization Ability}

\textbf{For diverse objects:} In this section, we delve into a detailed analysis of our method's generalization capabilities. It demonstrates the ability to execute distinct catching gestures based on the positions of the shadow hands and the objects being caught. Through an analysis of the high-dimensional point cloud features, we categorize most objects into three primary classes. Remarkably, our method exhibits over three different catching gestures for each of these three object types, as depicted in Fig. \ref{three_gestures}. For objects falling into the regular, flat, and elongated categories, the model generalizes three distinct gestures in each of the five tasks. For instance, when dealing with regular objects like apples, it employs either a half-grip or pinching gesture. Slender objects like bananas are handled with a full-grip position. Flat objects, such as scissors and pokers, are directly held and secured between the fingers and palms. This versatility in catching gestures reflects the algorithm's remarkable adaptability to different object types and positions.

\textbf{For different initial postures:} Furthermore, given that the initial position of the object is randomly selected in each episode, the object's motion follows different trajectories due to aerodynamic processes.
Impressively, our method excels at solving dynamic catching tasks by generating diverse catching gestures, as illustrated in Fig. \ref{same_object_gestures}. For example, when capturing a pen, we observe that the catching hand can adapt to grasp the pen with varying postures. In practical scenarios, catching a randomly rotating pen in mid-air, especially due to its anisotropic shape, would be challenging. However, our results demonstrate the algorithm's ability to catch randomly rotating objects in the air, even when dealing with challenging objects like pens.

\section{Simulation to Real World Design}

To ensure the successful transfer from the simulation environment to the real world, we address key challenges associated with sim-to-real transfer and offer specific solutions to overcome them.

\textit{Challenge 1: Can the observations obtained during the simulation process be effectively acquired in the real world?}

\textbf{Solution 1}: In the following, we introduce how to obtain each component of the observation in the real world. 
\begin{itemize}
    \item Position, rotation, velocity, target position, rotation of the object. We have validated the use of RGB image input from the main viewing angle, utilizing GPT-4V or other multimodal large models in combination with object detection and segmentation technologies. This approach enables us to obtain the initial pixel position of both the current object and the target. Real-time 6D pose estimation allows us to track the object's pose and pixel position during its flight. By integrating data from a binocular or depth camera, we can extract the object's three-dimensional world coordinates, and by leveraging the video stream frame rate, we can calculate the object's motion speed. 
    \item Point cloud prior information of the object. The object point cloud feature serves as prior information, capturing the shape characteristics after standardizing the object's size. This feature does not need to be acquired in real-time during the process. When acquisition is not feasible, the point cloud feature can be substituted with the representative geometry of the three types of objects detailed in Appendix B. 
    \item The position, rotation, and speed of shadow hands' wrists and joints. These can be obtained from the robot itself.
    \item The force of the fingertips. These can be obtained through a fingertip force sensor. 
\end{itemize}

\textit{Challenge 2: How can we address the observation errors that occur in the real-world observations?}

\textbf{Solution 2}: To account for potential errors in real-world observation acquisition, we introduce a pose perturbation of ±5° on each of the three axes of the current hand posture and a perturbation of ±5 cm in the distance between the hands. This approach is designed to test the policy's tolerance for variations in hand positions and postures. Additionally, we incorporate an observation error of ±5 cm and a pose perturbation of ±20° on each axis for the position and pose measurements of the object and target, respectively, along with an observation error of ±20\% for the object's velocity measurement. The success rate measured after applying all these errors is compared with the original success rate, as presented in Table \ref{tab_add}.

\begin{figure*}[ht]
  \centering
  \includegraphics[width=0.9\hsize]{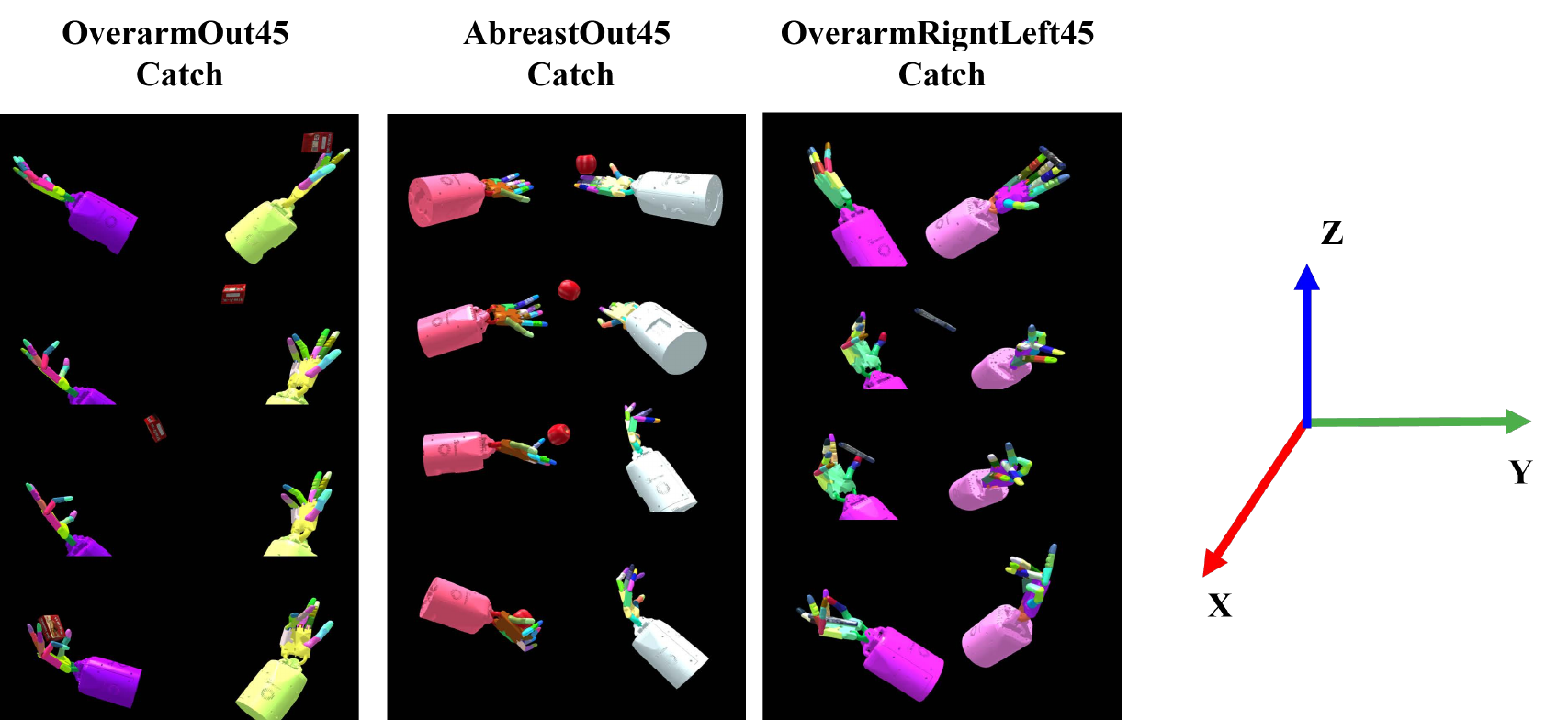}
  \caption{Video snapshots of throwing-catching tasks performed by our method. The description of tasks can be found in Appendix \ref{new_task}. }
  \label{newtask_intro}
\end{figure*}
\begin{table*}[ht]
  \centering
  \renewcommand\arraystretch{1.5}
  \caption{Success rate of Task with disturbance}
  \label{tab_add}
  \resizebox{0.6\textwidth}{!}{
      \begin{threeparttable}
          \begin{tabular}{c|c|c|c}
            \hline
            \multirow{3}{*}{\normalsize{\bf Task}} & \multicolumn{2}{c}{\normalsize{\bf Success Rate ($\%$)}} \\
            \cmidrule(r){2-4}
            ~ & \multicolumn{2}{c}{\bf Orignal(11 object)} & \multicolumn{1}{c}{\bf Add disturbance (6 object)} \\

            \cmidrule(r){2-4}
            ~ & \bf META(Train) & \bf META(Test) & \bf META \\
            \hline
            Overarm & \bf 60.30 & \bf 69.97 & \bf 60.17 \\
            Overarm2Abreast & \bf 68.77 & \bf 79.88 & \bf 66.75 \\
            Under2Overarm & \bf 73.95 & \bf 71.06 & \bf 70.09 \\
            Abreast & \bf 72.95 & \bf 79.97 & \bf 68.84 \\
            Underarm & \bf 88.99 & \bf 73.51 & \bf 83.65 \\
            \hline
          \end{tabular}

      \end{threeparttable}
  }
\end{table*}
\

\textit{Challenge 3: Can the action outputs from the simulation process be implemented for control in the real world?}

\textbf{Solution 3}:
To ensure that the actions generated in the simulation can be effectively controlled in the real world, our algorithm framework 
utilizes most position-based control, and minimizes reliance on force-based control. In our approach, the finger joints of the shadow hands are position-controlled, while only the force and torque of the wrists are controlled. When the robotic arm is connected to the wrist and transitioned to the real-world setting, the planning and control of the robotic arm can be handled by an impedance controller after getting the force and torque of the wrists. This design facilitates future research and verification tasks involving the connection of the wrists to the robotic arm.

\textit{Challenge 4: Can our method generalize to different hand poses with significant variations?}

\textbf{Solution 4}:
Although we believe that the five hand postures designed in this work already include the main combinations of the three axes of the two-hand xyz, and the other hand postures can be varied from these hand postures, in order to ensure the effectiveness of our policy, we added three more general postures: overarmout45, abreastin45, and overarmrightleft45, and achieved a good meta-success rate on the test set and the training set after training, as shown in Table \ref{tab_newtask}. A schematic diagram of the task is shown in Figure \ref{newtask_intro}.
\begin{table*}[ht]
  \centering
  \renewcommand\arraystretch{1.5}
  \caption{Success rate of New Task}
  \label{tab_newtask}
  \resizebox{0.4\textwidth}{!}{
      \begin{threeparttable}
          \begin{tabular}{c|c}
            \hline
            \multirow{2}{*}{\normalsize{\bf Task}} & \multicolumn{1}{c}{\normalsize{\bf Success Rate ($\%$)}} \\
            ~ & \bf META \\
            \hline
            Overarmout45 & \bf 61.44  \\
            Abreastin45 & \bf 68.77  \\
            Overarmrightleft45 & \bf 85.41  \\
            \hline
          \end{tabular}
      \end{threeparttable}
  }
\end{table*}
Finally, regarding future work, we will support further research on the combination of dexterous hands with robotic arms.

\subsection{ Description of Added Tasks
}
\label{new_task}
The new tasks are depicted as follows. 
\begin{itemize}
    \item \textbf{Overarmout45 Catch}: The hands are oriented vertically. Then, the throwing hand rotates around the x-axis of the reference coordinate system (shown on the right side of Fig. \ref{newtask_intro}) by -45 degrees, while the catching hand rotates around the x-axis by 45 degrees.
    \item \textbf{Abreastin45 Catch}: The setting of hands is built upon the Abreast Catch task. Then, the throwing hand rotates around the z-axis of the reference coordinate system (shown on the right side of Fig. \ref{newtask_intro}) by 45 degrees, while the catching hand rotates around the z-axis by -45 degrees.
    \item \textbf{Overarmrightleft45 Catch}: The hands are oriented vertically. Hereafter, the throwing hand rotates around the y-axis of the reference coordinate system (shown on the right side of Fig. \ref{newtask_intro}) by 45 degrees, while the catching hand rotates around the y-axis by -45 degrees.
\end{itemize}

\section{Comparison Experiments Additional Notes}
To further demonstrate the effectiveness of the proposed method compared to baseline and other reinforcement learning approaches, we conducted an additional comparative experiment, focusing on the success rate, as shown in Table \ref{tab_comp}. The results clearly indicate that our approach consistently outperforms both the baseline and other methods across all metrics.
\begin{table*}[ht]
  \centering
  \renewcommand\arraystretch{1.5}
  \caption{Success rate of Task with Other Algorithm}
  \label{tab_comp}
  \resizebox{0.9\textwidth}{!}{
      \begin{threeparttable}
          \begin{tabular}{c|c|c|c|c|c|c|c|c|c|c}
            \hline
            \multirow{3}{*}{\normalsize{\bf Task}} & \multicolumn{10}{c}{\normalsize{\bf             Success Rate ($\%$)}} \\
            \cmidrule(r){2-11}
            ~ & \multicolumn{2}{c}{\bf LTC} & \multicolumn{2}{c}{\bf PPO} & \multicolumn{2}{c}{\bf TRPO} & \multicolumn{2}{c}{\bf SAC} & \multicolumn{2}{c}{\bf DDPG}\\

            \cmidrule(r){2-11}
            ~ & \bf META(Train) & \bf META(Test) & \bf META(Train) & \bf META(Test) & \bf META(Train) & \bf META(Test) & \bf META(Train) & \bf META(Test) & \bf META(Train) & \bf META(Test) \\
            \hline
            Overarm & \bf 60.30 & \bf 69.97 & \bf 5.16 & \bf 0.71 & \bf 0.84 & \bf 0.96 & \bf 0.00 & \bf 0.39 & \bf 0.58 & \bf 0.65 \\
            Overarm2Abreast & \bf 68.77 & \bf 79.88 & \bf 24.75 & \bf 34.41 & \bf 0.00 & \bf 0.54 & \bf 0.00 & \bf 0.00 & \bf 6.81 & \bf 5.51 \\
            Under2Overarm & \bf 73.95 & \bf 71.06 & \bf 62.76 & \bf 66.34 & \bf 14.88 & \bf 19.12 & \bf 68.82 & \bf 69.86 & \bf 53.85 & \bf 59.24 \\
            Abreast & \bf 72.95 & \bf 79.97 & \bf 43.08 & \bf 52.76 & \bf 47.91 & \bf 44.88 & \bf 0.00 & \bf 0.00 & \bf 0.00 & \bf 0.00 \\
            Underarm & \bf 88.99 & \bf 73.51 & \bf 70.74 & \bf 65.81 & \bf 4.01 & \bf 4.425 & \bf 0.00 & \bf 0.00 & \bf 0.00 & \bf 0.00 \\
            \hline
          \end{tabular}
      \end{threeparttable}
  }
\end{table*}

\end{document}